\documentclass[10pt]{article}
\usepackage{microtype}
\usepackage{booktabs,paralist}
\usepackage{amsfonts,amsmath,amssymb,amsthm} 
\usepackage{nicefrac,microtype,physics,mathtools} 
\usepackage{color}
\usepackage{graphicx,subcaption}
\usepackage{thmtools,thm-restate}
\usepackage[normalem]{ulem}
\usepackage{float}
\usepackage{xfrac}
\usepackage{array}

\usepackage{times}
\usepackage[round]{natbib}
\usepackage[left=0.875in,right=0.875in,top=1in,bottom=1in]{geometry}
\usepackage[colorlinks,allcolors=blue]{hyperref}
\renewcommand{\cite}[1]{\citep{#1}}

\renewenvironment{abstract}
{\centerline{\large\bf Abstract}\vspace{0.7ex}%
  \bgroup\leftskip 20pt\rightskip 20pt\noindent}%
{\par\egroup\vskip 0.25ex}

\DeclareMathOperator{\diag}{\text{diag}}

\DeclareMathOperator*{\argmin}{\mathrm{argmin}}

\DeclareMathOperator*{\arcsinh}{\mathrm{arcsinh}}
\DeclarePairedDelimiter{\tri}{\langle}{\rangle}

\DeclarePairedDelimiter{\prn}{(}{)}
\DeclarePairedDelimiter{\nrm}{\|}{\|}
\DeclarePairedDelimiter{\brk}{[}{]}

\newcommand{\W}{\mathbf{W}}
\newcommand{\bfu}{\mathbf{u}}
\newcommand{\U}{\mathbf{U}}

\newcommand{\reparam}[0]{\phi}

\renewcommand{\c}{\mathcal}

\newcommand{\bR}{\mathbb{R}}
\newcommand{\Sn}{\mathcal{S}^d}
\newcommand{\R}{\mathbb{R}}

\renewcommand{\tilde}{\widetilde}

\newcommand{\floor}[1]{\left\lfloor#1\right\rfloor}

\newcommand{\innerprod}[2]{\langle #1,#2 \rangle}

\newcommand{\remove}[1]{}

\newtheorem*{assumption*}{Assumptions}
\newtheorem*{definition*}{Definition}
\newtheorem{theorem}{Theorem}

\newtheorem*{conjecture*}{Conjecture}

\newcommand{\cW}{\mathcal{W}}
\newcommand{\wmdk}[1]{{\mathbf{w}}_\text{MD}^{(#1)}}
\newcommand{\wngdk}[1]{{\mathbf{w}}_\text{NGD}^{(#1)}}
\newcommand{\w}{\mathbf{w}}
\newcommand{\wteta}{\w\big(\floor{t}_{\eta}\big)}
\newcommand{\tildewteta}{\tilde{\w}\big(\floor{t}_{\eta}\big)}

\newcommand{\ztnu}{z_{\floor{t}_{\nu}}}


\newcommand{\removed}[1]{}

\title{\rule{\linewidth}{1.5pt} \textbf{Mirrorless Mirror Descent: A Natural Derivation of Mirror Descent} \rule[8pt]{\linewidth}{1pt}\vspace{-5mm}}
\author{\normalsize
\begin{minipage}{0.3\textwidth}
\centering
\textbf{Suriya Gunasekar}\\
\small Microsoft Research at \\Redmond, WA USA
\small\url{suriya@ttic.edu} \\
\end{minipage}\hspace{0.02\textwidth}
\begin{minipage}{0.3\textwidth}
\centering
\textbf{Blake Woodworth}\\
\small Toyota Technological  Institute at \\Chicago, IL USA \\
\small\url{blake@ttic.edu}
\end{minipage}\hspace{0.02\textwidth}
\begin{minipage}{0.3\textwidth}
\centering
\textbf{Nathan Srebro}\\
\small Toyota Technological  Institute at \\Chicago, IL USA \\
\small\url{nati@ttic.edu}
\end{minipage}
}
\date{}

\begin{document}
\maketitle

\begin{abstract}
 We present a  primal only derivation of Mirror Descent as a ``partial'' discretization of gradient flow on a Riemannian manifold where the metric tensor is the Hessian of the Mirror Descent potential.  We contrast this discretization to Natural Gradient Descent, which is obtained by a ``full'' forward Euler discretization.   This view helps shed light on the relationship between the  methods and allows generalizing Mirror Descent to general Riemannian geometries, even when the metric tensor is {\em not} a Hessian, and thus there is no ``dual.''
\end{abstract}

\section{Introduction}
Mirror Descent \citep{nemirovsky1983problem,beck2003mirror} is an important template first-order optimization method for optimizing w.r.t.~a geometry specified by a strongly convex potential function.  It enjoys rigorous guarantees, and its stochastic and online variants are even optimal for certain learning settings \citep{srebro2011universality}.
As its name implies, Mirror Descent was derived, and is typically
described, in terms of performing gradient steps in the \emph{dual space} using a \emph{mirror map}: in each iteration, one maps the iterate to the dual space through a link function, performs an update there, and then mirrors the updates back to the primal space. Understanding Mirror Descent in this way requires explicitly discussing  the dual space or the link function.

In this paper we derive a direct ``primal'' understanding of Mirror Descent, and in order to do so, turn to Riemannian Gradient Flow.  The infinitesimal limit of Mirror Descent, where the stepsize is taken to zero, corresponds to a Riemannian Gradient Flow on a manifold with a metric tensor that is given by the Hessian of the potential function used by Mirror Descent (see Section~\ref{sec:md-disc}).  
The standard forward Euler discretization of this Riemannian Gradient Flow gives rise to the Natural Gradient Descent algorithm \citet{amari1998natural}.  Our main observation is that a ``partial'' discretization of the flow, where we discretize the optimization objective but not the metric tensor specifying the geometry,  gives rise precisely to Mirror Descent (see Section~\ref{sec:md-disc}).  This view allows us to understand how Mirror Descent is, in a sense, more ``faithful'' to the geometry compared to Natural Gradient Descent.

The relationship we reveal between Mirror Descent
and Natural Gradient Descent is different from, and complementary to, the relationship discussed by \citet{raskutti2015information}---while their work showed how Mirror Descent and Natural Gradient Descent are \emph{dual} to each other, in the sense that Mirror Descent is equivalent to Natural Gradient Descent in the dual space, we avoid the duality altogether. We work {\em only} in the primal space, derive Mirror Descent directly, without considering the dual or link functions, and show how both methods are \textit{different  discretizations} of the \emph{same} flow  (more details in Section~\ref{sec:contrast-with-prior}).

As a consequence, our derivation of Mirror Descent allows us  to conceptually generalize Mirror Descent to any Riemannian manifold, including situations where metric tensor is {\em not} specified by the Hessian of any potential, and so there is no dual, no link function and no Bregman divergence.

\subsection{Background: Mirror Descent} \label{sec:md}
Consider optimizing a smooth objective $F:\cW\rightarrow\bR$ over a closed convex set $\cW\subseteq\bR^d$, $\min_{\w \in \c{W}} F(\w)$. We will focus on unconstrained optimization, \textit{i.e.,} $\c{W}=\bR^d$. 

Mirror Descent is a template first-order optimization algorithm specified by a strictly convex potential function $\psi:\c{W}\to\bR$. Mirror Descent was developed as a generalization of gradient descent to non-Euclidean geometries, where the local geometry is specified by the Bregman divergences w.r.t. $\psi$ given by $D_\psi(\w,\w')=\psi(\w)-\psi(\w')-\innerprod{\nabla\psi(\w')}{\w-\w'}$. The iterative updates of Mirror Descent with stepsize $\eta$ are defined as:
\begin{equation}
   \wmdk{k+1}\!=\!\argmin_{\w\in\cW}\,\eta\innerprod{\w}{\nabla F(\wmdk{k})}\!+\!D_\psi(\w,\wmdk{k}).
\label{eq:md-gen-1}
\end{equation}
For unconstrained optimization, the updates in \eqref{eq:md-gen-1} are equivalently given by:
\begin{equation}
    \nabla\psi(\wmdk{k+1})=\nabla\psi(\wmdk{k})-\eta \nabla F(\wmdk{k}),
\label{eq:md-gen}
\end{equation}
where $\nabla\psi$ is called the \emph{link function} and provides
a mapping between the primal optimization space $\w\in\cW$ and the dual
space of gradients.  Mirror Descent thus performs gradient updates in the dual ``mirror''.

\subsection{Background: Riemannian Gradient Flow}

Let $(\cW,H)$ denote a Riemannian manifold over $\cW=\bR^d$ equipped with a metric tensor $H(\w)$ at each point $\w\in\cW$. 
The metric tensor $H(\w):T_{\c{W}}(\w) \times T_{\c{W}}(\w)\to\bR$ denotes a smoothly varying \textit{local} inner product on the tangent space at $\w$. Intuitively, the tangent space $T_{\c{W}}(\w)$ is the vector space of all infinitesimal directions $\dd{\w}$ that we can move in while following a smooth path on $\c{W}$ \citep[for more detailed exposition see, e.g.][]{do2016differential}. For manifolds over $\R^d$, we can take the tangent space as $T_{\c{W}}(\w) = \R^d$ and the metric tensors can be identified with positive definite matrices $H(\w)\in\Sn_{++}$ that define local distances at $\w$ as  ${d}(\w,\w+\dd\w)=\sqrt{\dd\w^\top H(\w) \dd\w}$ for infinitesimal $\dd\w$.

The Riemannian Gradient Flow dynamics $\w(t)$ for the optimization problem $\min_{\w \in \c{W}} F(\w)$ with initialization $\w(0)=\w_\text{init}$ are obtained by seeking an infinitesimal change in $\w(t)$ that would lead to the best improvement in objective value, while controlling the length of the change in terms of the manifold geometry, that is,
\begin{equation}\label{eq:true-rgf}
\w(t+\dd t) = \argmin_{\w}\; F(\w)\dd t + \frac{1}{2} d(\w,\w(t))^2.
\end{equation}
For infinitesimal $\dd t$, using $\dd\w(t) = \w(t+\dd t)-\w(t)$, we can replace $F(\w)$ and $d(\w,\w(t))$ with their first order approximations\footnote{We use $\langle.,.\rangle $ to denote the canonical inner product in $\bR^d$ and $\nabla$ denotes the gradient operator such that $\langle \nabla F(\w),\dd\w\rangle=F(\w+\dd\w)-F(\w)$ for all infinitesimal $\dd\w$} $F(\w(t))+\langle \dd\w , \nabla F(\w(t)) \rangle$, and $d(\w,\w(t))=\sqrt{\dd\w^\top H(\w(t))\dd\w}$:
\begin{equation}
\label{eq:inf-rgf}
\dd\w(t) = \argmin_{\dd\w}\; \langle \dd\w , \nabla F(\w(t)) \rangle\dd t + \frac{1}{2}\dd\w^\top H(\w(t))\dd\w.
\end{equation}
Solving for $\dd \w$, we obtain:
\begin{equation}
    \dot{{\w}}(t) = -H(\w(t))^{-1}\nabla F(\w(t)),
    \label{eq:rgf}\tag{GF}
\end{equation}
where here and throughout we denote $\dot{{\w}}=\dv{\w}{t}$.  

We refer to the path specified by \eqref{eq:rgf} and initial condition $\w(0)=\w_\text{init}$ as \emph{Riemannian Gradient Flow} or sometimes simply as \textit{gradient flow}. 

Examples of Riemannian metrics and corresponding gradient flow that arise in learning and related areas:  
\begin{compactenum}
\item The standard Euclidean geometry is recovered with $H(\w)=I$.  In this case \eqref{eq:rgf} reduces to the standard gradient flow  $\dot{\w}=-\nabla F(\w)$.  When $H(\w)=H$ is fixed to some other positive definite $H$, we get the pre-conditioned gradient flow  $\dot{\w}=-H^{-1} \nabla F(\w)$, which can also be thought of as the gradient flow dynamics on a reparametrization $\tilde{\w}=H^{1/2} \w$, i.e.~with respect to geometry specified by a linear distortion.
\item For any strongly convex potential function $\psi$ over $\c{W}$, the Hessian $\nabla^2\psi$ defines a non-Euclidean metric tensor.  Examples include squared $\ell_p$ norms $\psi(\w)=\norm{\w}_p^2$ for $1<p\leq2$ ($p=2$ again recovers the standard Euclidean geometry) and a particularly important example is the simplex, endowed with an entropy potential $\psi(\w)=-\sum_i \w_i \log \w_i$.
\item Information geometry \citep{amari2012differential} is concerned with a manifold of probability distributions, e.g.~in a parametric family $\{p(x;\theta):\theta\in{\Theta}\subseteq\bR^d\}$, typically endowed with the metric derived from the KL-divergence.  In our notation, we would consider this as defining a Riemannian metric structure over the manifold parameters $\c{W}=\Theta$, with a metric tensor given by the Fisher information matrix $H(\theta)=\c{I}(\theta)=\mathbb{E}_x[-\nabla_\theta \log(p(x;\theta))\nabla_\theta \log(p(x;\theta))^\top\,|\,\theta]$.  Such a geometry can also be obtained by considering the entropy as a potential function and taking its Hessian.
\end{compactenum}

\section{Discretizing Riemannian Gradient Flow}\label{sec:discretization}
The Riemannian Gradient Flow is a continuous object defined in terms of a differential equation \eqref{eq:rgf}.  To utilize it algorithmically, we consider discretizations of the flow.

\subsection{Natural Gradient Descent}\label{sec:discNGD}
Natural Gradient Descent is obtained as  the  forward Euler discretization with stepsize $\eta$ of the gradient flow \eqref{eq:rgf}:
\begin{equation}
    \begin{gathered}
    \wngdk{k+1} = \wngdk{k}-\eta H(\wngdk{k})^{-1}\nabla F(\wngdk{k}), \\ \text{ where }\wngdk{0}=\w_\text{init}. 
    \label{eq:ngd}
\end{gathered}
\end{equation}
These Natural Gradient
Descent updates were suggested and popularized by
\citet{amari1998natural}, particularly in the context of
\emph{information geometry}, where the metric tensor is given by the
Fisher information matrix of some family of distributions. 

An equivalent way to view the updates \eqref{eq:ngd} is by discretizing the right-hand-side of the differential equation \eqref{eq:rgf} as follows:
\begin{equation}
    \dot{\w}(t) = -H\left(\wteta\right)^{-1}\nabla F\left(\wteta\right),
    \label{eq:ngd-disc}\tag{NGD}
\end{equation}
where $\floor{t}_{\eta}:=\eta\floor{\sfrac{t}{\eta}}$ denotes a discretization at scale $\eta$, i.e.~the largest $t'<t$ such that $t'$ is an integer multiple of $\eta$. 

 The differential equation \eqref{eq:ngd-disc} specifies a piecewise linear solution $\w(t)$ that interpolates between the Natural Gradient Descent iterates.  In particular, the Natural Gradient Descent iterates in \eqref{eq:ngd} are given by $\wngdk{k}=\w(\eta k)$ where $\w(t)$ is the solution of \eqref{eq:ngd-disc} with the initial condition $\w(0)=\w_{\text{init}}$, and we could have alternatively defined the forward Euler discretization in this way.

\subsection{Mirror Descent}\label{sec:md-disc}
In \eqref{eq:ngd-disc} we fully discretized the Riemannian Gradient Flow 
\eqref{eq:rgf}.  Now consider an alternate, partial, forward discretization of 
\eqref{eq:rgf}, where we discretize the gradient $\nabla F(\w)$, but
not the local metric $H(\w)$:
\begin{equation}
    \dot{\w}(t) = -H(\w(t))^{-1}\nabla F(\wteta).
    \label{eq:md-disc}\tag{MD}
\end{equation}
The resulting solution $\w(t)$ is piecewise smooth. We will again consider the sequence of iterates  at discrete points $t=\eta k$:
\begin{equation}
  \label{eq:md-as-disc}
  \w^{(k)} := \w(\eta k).
\end{equation}
Our main result is that if $H({\w}) = \nabla^2 \psi({\w})$, then the updates \eqref{eq:md-as-disc} from the solution of \eqref{eq:md-disc} are precisely the Mirror Descent updates in \eqref{eq:md-gen} with potential $\psi$.

\begin{theorem} \label{thm:main} Let $\psi:\cW\to\bR$ be  strictly convex and twice differentiable and let
  $H(\w)=\nabla^2 \psi(\w)$ be invertible everywhere. Consider the updates $\w^{(k)}=\w(k\eta)$ obtained from the solution of
  \eqref{eq:md-disc} with  initial condition $\w(0)$ and stepsize $\eta$.  Then $\w^{(k)}$ are the same as the Mirror Descent updates $\wmdk{k}$  in  \eqref{eq:md-gen} obtained with   $\psi$ as the potential function and the same initialization and stepsize.
\end{theorem}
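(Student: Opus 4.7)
The plan is to analyze the ODE \eqref{eq:md-disc} on each interval of the form $[k\eta,(k+1)\eta)$, where by definition $\floor{t}_\eta = k\eta$, so that $\nabla F(\w(\floor{t}_\eta)) = \nabla F(\w^{(k)})$ is \emph{constant in $t$} on this interval. The whole argument then reduces to tracking how $\nabla\psi(\w(t))$ evolves under \eqref{eq:md-disc}, and I expect the proof to be essentially one line of chain rule followed by induction.

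Concretely, I would first fix $k$, assume inductively that $\w^{(k)} = \wmdk{k}$, and write $g_k := \nabla F(\w^{(k)})$. On $[k\eta,(k+1)\eta)$ the ODE becomes
\begin{equation*}
\dot{\w}(t) = -\bigl(\nabla^2\psi(\w(t))\bigr)^{-1} g_k.
\end{equation*}
Next I would consider the auxiliary curve $\bfu(t) := \nabla\psi(\w(t))$ in the dual/gradient space. Differentiating and applying the chain rule,
\begin{equation*}
\dot{\bfu}(t) = \nabla^2\psi(\w(t))\,\dot{\w}(t) = \nabla^2\psi(\w(t))\cdot\bigl(-(\nabla^2\psi(\w(t)))^{-1} g_k\bigr) = -g_k.
\end{equation*}
Thus $\bfu(t)$ is \emph{affine} in $t$ with constant slope $-g_k$ on this interval, and integrating from $k\eta$ to $(k+1)\eta$ gives
\begin{equation*}
\nabla\psi(\w^{(k+1)}) - \nabla\psi(\w^{(k)}) = -\eta\, g_k = -\eta\,\nabla F(\w^{(k)}),
\end{equation*}
which is precisely the Mirror Descent update \eqref{eq:md-gen}. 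Combined with the base case $\w^{(0)} = \w(0) = \wmdk{0}$, this closes the induction and shows $\w^{(k)} = \wmdk{k}$ for all $k$.

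The only subtlety, and what I would flag as the main thing to be careful about, is that the argument implicitly uses existence and uniqueness of a smooth solution to \eqref{eq:md-disc} on each interval $[k\eta,(k+1)\eta)$. The hypothesis that $\psi$ is strictly convex and twice differentiable with $\nabla^2\psi$ invertible everywhere makes the right-hand side of the ODE locally Lipschitz in $\w$ (with constant gradient term), giving local existence and uniqueness; one should also observe that the change-of-variables $\bfu = \nabla\psi(\w)$ is a diffeomorphism under these assumptions (since $\nabla\psi$ has invertible Jacobian), so recovering $\w^{(k+1)}$ from $\nabla\psi(\w^{(k+1)})$ is well-defined and matches the unconstrained Mirror Descent update in \eqref{eq:md-gen}. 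Everything else is routine.
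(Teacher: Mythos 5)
Your proof is correct and rests on exactly the same key step as the paper's: the chain-rule identity $\dv{}{t}\nabla\psi(\w(t)) = \nabla^2\psi(\w(t))\,\dot\w(t)$ combined with the piecewise constancy of $\nabla F(\wteta)$ on each interval $[k\eta,(k+1)\eta)$. The only difference is direction --- the paper starts from the Mirror Descent iterates, interpolates them linearly in the dual space, and verifies that the resulting path satisfies \eqref{eq:md-disc}, whereas you integrate \eqref{eq:md-disc} forward and recover the Mirror Descent recursion; your explicit remark about existence and uniqueness of the ODE solution addresses a point the paper leaves implicit (its verification argument also needs uniqueness to conclude the two paths coincide).
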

\begin{proof}
Consider the Mirror Descent iterates with step size $\eta$ for link function $\nabla \psi$ from \eqref{eq:md-gen} \[\nabla\psi(\wmdk{k+1})=\nabla\psi(\wmdk{k})-\eta\nabla F(\wmdk{k}).\]

Define a Mirror Descent path $\hat{\w}(t)$ by linearly interpolating in the dual space as follows: 
\begin{equation*}
    \forall k,\forall t\in[k\eta,(k+1)\eta):\;
    \nabla\psi(\hat{\w}(t))=\nabla\psi(\wmdk{k})-(t-k\eta)\nabla F(\wmdk{k}). 
\end{equation*}

One can easily check that $\wmdk{k}=\hat{\w}(\eta k)$. The above equation describing a piecewise smooth path $\hat{\w}(t)$ equivalently corresponds to, $\dv{\nabla\psi(\hat\w(t))}{t}=-\nabla F\big(\hat{\w}\big(\floor{t}_\eta\big)\big)$.
Using the chain rule, we see that $\hat\w(t)$ follows the  path in \eqref{eq:md-disc}:
\begin{align*}
    \dot{\hat\w}(t)&=-\nabla^2 \psi(\hat\w(t))^{-1}\nabla F\big(\hat{\w}\big(\floor{t}_\eta\big)\big) \\
    &=-H(\hat\w(t))^{-1}\nabla F\big(\hat{\w}\big(\floor{t}_\eta\big)\big).
\end{align*}
This completes the proof of the theorem. 
\end{proof}

\subsection{A More Faithful Discretization?} Comparing the two discretizations \eqref{eq:ngd-disc} and \eqref{eq:md-disc} allows us to understand the relationship between Natural Gradient updates \eqref{eq:ngd} and Mirror Descent updates \eqref{eq:md-gen}.  Although both updates have the same infinitesimal limit, as previously discussed by, e.g.~\citet{gunasekar2018characterizing}, they differ in how the discretization is done with finite stepsizes: while Natural Gradient Descent corresponds to discretizing both the objective \textit{and} the geometry, Mirror Descent involves discretizing only the objective (accessed via $\nabla F({\w}(t))$), but not the geometry (specified by $H(\w(t))$).  
In this sense,  Mirror Descent is a ``more accurate'' discretization, being more faithful to the geometry of the search space.  

 This view of Mirror Descent also allows us to contrast the computational aspects of implementing the two algorithms. While both the algorithms are first order methods, which only require gradient access to the objective (at discrete iterates, $\nabla F(\w^{(k)})$, Natural Gradient Descent can be  implemented if we can compute the inverse Hessian of the metric tensor (\textit{i.e.,}~we need to either obtain and invert the metric tensor, or have direct access to its inverse).  But at least for a traditional implementation of the Mirror Descent updates in \eqref{eq:md-gen}, we need (a) the metric tensor $H(\w)$ to be a Hessian map, i.e.~the differential equation $H(\w)=\nabla^2 \psi(\w)$ should have a solution, and (b) we need to be able to efficiently calculate the link $\nabla\psi$ and inverse link $\nabla \psi^{-1}$ functions. More generally, one needs some way of solving the ordinary differential equation \eqref{eq:md-disc} to implement Mirror Descent. 

\subsection{When Does a Potential Exist?}  

In Theorem \ref{thm:main} we established that for any smooth strictly convex potential $\psi({\w})$ with everywhere invertible Hessian, we can define a metric tensor $H(\w)=\nabla^2 \psi(\w)$ so that Mirror Descent is obtained as a discretization of the Riemannian Gradient Flow \eqref{eq:rgf}. One might ask whether this connection with Mirror Descent holds for {\em any} Riemannian Gradient Flow.

The Riemannian Gradient Flow \eqref{eq:rgf}, and hence also the discretization \eqref{eq:md-disc}, can be defined for any smooth, invertible Riemannian metric tensor $H:\bR^d\rightarrow\Sn_{++}$.  But the classic Mirror Descent updates \eqref{eq:md-gen} are only defined in terms of a potential function $\psi$.  To relate Riemannian Gradient Flow w.r.t.~some metric tensor $H(\w)$ to Mirror Descent, we need to identify such a potential function, i.e.~a function $\psi:\c{W}\rightarrow\R$ s.t.~$\nabla^2 \psi=H$.  That is, we need there to be a solution to the partial differential equation $\nabla^2 \psi=H$, or in other words for $H$ to be a Hessian map.

When is a metric tensor a Hessian map? Or, when is there a solution to $\nabla^2 \psi=H$?
By Poincar\'e's lemma, the rows of the metric tensor, $H_i:\bR^d\rightarrow\bR^d$  are gradients (\textit{i.e.,} $\forall i, H_i=\nabla \phi_i$ for some $\phi_i:\bR^d\to\bR$) if and only if they satisfy the following symmetry condition:
\begin{equation}\label{eq:Hijk}
\forall_{{\w}\in\cW}\,\forall_{i,j,k\in\{1,\ldots,n\}}\quad    \frac{\partial H_{i,j}({\w})}{\partial {\w}_k} = \frac{\partial H_{i,k}({\w})}{\partial {\w}_j}.
\end{equation}
Thus, \eqref{eq:Hijk} is equivalent to $H$ being a Jacobian of some vector-valued function $\phi=[\phi_1,\phi_2,\ldots,\phi_d]:\bR^d\to\bR^d$. Further, since $H$ is symmetric (by definition), $\phi$ also satisfies the same symmetry condition $\frac{\partial \phi_i({\w})}{\partial {\w}_j} = H_{i,j}(\w) = H_{j,i}(\w) = \frac{\partial \phi_j({\w})}{\partial {\w}_i}$, and hence in turn is a gradient field (\textit{i.e.,} $\phi=\nabla\psi$ for some $\psi$). 
Therefore, \eqref{eq:Hijk} is equivalent to $\nabla^2 \psi=H$ having a solution, and since $H({\w})$ is positive definite (in order to be a valid metric tensor), $\psi$ must be strictly convex, as we would desire of a potential function.  Hence, we can conclude that the discretization of Riemannian Gradient Flow in \eqref{eq:md-disc} corresponds to classic Mirror Descent \eqref{eq:md-gen} for some strictly convex potential $\psi$ if and only if \eqref{eq:Hijk} holds. The requirement in \eqref{eq:Hijk} is non-trivial and does not hold in general, for instance, a seemingly simple metric tensor $H(\w) = I + \w\w^\top$ fails to satisfy \eqref{eq:Hijk} and therefore is not a Hessian map\footnote{This metric tensor $H(\w) = I + \w\w^\top$ can arise by considering the $d$-dimensional manifold embedded in $\R^{d+1}$ which consists of points $\begin{bmatrix}\w \\ \frac{1}{2}\nrm*{\w}^2\end{bmatrix}$ with local distances induced by the Euclidean geometry on $\R^{d+1}$. The distance between $\w$ and $\w + \dd \w$ is then given by
\begin{multline*}
d(\w, \w+\dd\w) 
= \nrm*{\begin{bmatrix}\w \\ \frac{1}{2}\nrm*{\w}^2\end{bmatrix} - \begin{bmatrix}\w+\dd\w \\ \frac{1}{2}\nrm*{\w+\dd\w}^2\end{bmatrix}} \\
= \sqrt{\dd\w^\top (I + \w\w^\top) \dd\w + \frac{1}{4}\nrm{\dd\w}^4 - \tri{\w,\dd\w}\nrm{\dd\w}^2}
\end{multline*}
For infinitesimal $\dd\w$, this means $d(\w, \w+\dd\w) = \sqrt{\dd\w^\top (I + \w\w^\top) \dd\w}$, and indeed the metric tensor is described by $H(\w) = I + \w\w^\top$. That this is not a Hessian map can be seen by simply calculating $\frac{\partial}{\partial \w_1} H_{1,2}(\w) = \frac{\partial}{\partial \w_1} \w_1\w_2 = \w_2 \neq \frac{\partial}{\partial \w_2} H_{1,1}(\w) = \frac{\partial}{\partial \w_2} 1 + \w_1^2 = 0$.}.

\subsection{Contrast With Prior Derivations} \label{sec:contrast-with-prior}
We emphasize that our derivation of Mirror Descent as a partial discretization of Riemannian Gradient Flow is rather different from, and complementary to, a previous relationship pointed out between Natural Gradient Descent and Mirror Descent by \citet{raskutti2015information}.  Their derivation {\em does} rely on duality and existence of a potential, and thus a link function.   \citeauthor{raskutti2015information} showed that Mirror Descent over $\w\in\c{W}$  corresponds to Natural Gradient Descent {\em in the dual}, that is after a change of parametrization given by the link function $\tilde{\w}= \nabla\psi(\w)$. \removed{Thus, while Natural Gradient Descent takes steps along straight rays in the primal space, they discuss how Mirror Descent takes steps along straight rays in the dual.}

In terms of the discretized differential equations \eqref{eq:ngd-disc} and \eqref{eq:md-disc}, the above relationship can be stated as follows: the path of the Natural Gradient Descent discretization \eqref{eq:ngd-disc} is piecewise linear in the primal space, i.e.~$\w(t)$ is piecewise linear, while the path of the Mirror Descent discretization \eqref{eq:md-disc} is piecewise linear in the \textit{dual space}, i.e.~$\nabla\psi(\w(t))$ is piecewise linear, and consequently curved in the primal space. But this view does not explain why Mirror Descent might be preferable when we are interested in the primal geometry.  In contrast, here we focus only on the (primal) Riemannian geometry, do not use a link function nor the dual, and highlight why Mirror Descent is more faithful to this primal geometry.  

\citeauthor{raskutti2015information}'s dual view is also captured by another popular way of developing Mirror Descent as a discretization of a differential equation: when the metric tensor is a Hessian map and $H(\w)=\nabla^2 \psi(\w)$, then the partial differential equation \eqref{eq:rgf} is equivalent to the following\footnote{To see this, apply the chain rule to the left hand side of \eqref{eq:grfdual} to get $\nabla^2\psi(\w(t))\dot\w(t)=-\nabla F(\w(t))$ and then multiply both sides by $\nabla^{2}\psi(\w(t))^{-1}=H(\w(t))^{-1}$ to get \eqref{eq:rgf}}  \citep{nemirovsky1983problem,warmuth1997continuous,raginsky2012continuous}:
\begin{equation}\label{eq:grfdual}
\dv{}{t}{ \nabla \psi(\w(t))} = -\nabla F(\w(t)).
\end{equation}
A forward Euler discretization of the differential equation \eqref{eq:grfdual} yields the Mirror Descent updates in \eqref{eq:md-gen}.  This can be viewed as using standard (full discretization) forward Euler, corresponding to piecewise linear updates and Natural Gradient Descent, but on the {\em dual} variables, i.e.~discretizing $\dv{\tilde{\w}}{t}$ where $\tilde{\w}= \nabla\psi(\w)$.

Viewing Mirror Descent as a forward Euler discretization of \eqref{eq:grfdual}, or as dual to Natural Gradient Descent, as in previous derivations and discussions of Mirror Descent still depends on having a link function $\nabla \psi(\w)$ such that the metric tensor is a Hessian map $H=\nabla^2 \psi$.   One might ask whether we could perform such derivations relying on a change-of-variables ``link'' function even if the metric tensor $H$ is not a Hessian map.  In other words, could we have a function $\reparam:\R^d\rightarrow\R^d$ such that $\reparam(\w(t))$ is piecewise linear under the Mirror Descent dynamics \eqref{eq:md-disc}, in which case we could derive Mirror Descent as Natural Gradient Descent on $\reparam(\w(t))$ or as the forward Euler discretization
\begin{equation}\label{eq:speculative-g}
\frac{\dd}{\dd t}\reparam(\w(t)) = -\nabla F(\wteta).    
\end{equation}
That is, when does there exists  $\reparam:\R^d\rightarrow\R^d$ such that \eqref{eq:speculative-g} is equivalent to \eqref{eq:md-disc} for any smooth objective $F(\w)$?  Applying the chain rule to the left hand side of \eqref{eq:speculative-g}, this would require that $\nabla \reparam = H$, which further implies that $H$ is a Hessian map\footnote{Applying the chain rule on \eqref{eq:speculative-g} and substituting \eqref{eq:md-disc} we get $\nabla \reparam(\w(t)) H(\w(t))^{-1} \nabla F(\wteta) = \nabla F(\wteta)$.  If this holds for any objective $F(\w)$, it must be that $\nabla \reparam H^{-1} = I$.  But $\nabla \reparam = H$  indicates that $H$ is a Jacobian map, which for symmetric $H$  further implies $\eqref{eq:Hijk}$ since both sides evaluate to $\partial_{j,k} \reparam_i$.}.
Therefore, if the metric tensor is {\em not} a Hessian map, there is no analogue to the link function, and we cannot obtain the discretization in \eqref{eq:md-disc} as Natural Gradient Descent after some change of variables, nor as a forward Euler discretization of a differential equation similar to \eqref{eq:grfdual}.

An distinguishing feature of our novel derivation of Mirror Descent that clearly differentiates it from all prior derivations, is that it does {\em not} require a potential, dual or link function, and so it does {\em not} rely on the metric tensor being a Hessian map.  This is exemplified by the fact that, unlike any prior derivation, it allows us to conceptually generalize Mirror Descent to metric tensors that are {\em not} Hessian maps.  

\section{Potential-free Mirror Descent}\label{sec:mirrorless}

As we emphasized, a significant difference between our derivation and previous, or ``classical'', derivations of Mirror Descent, is that our derivations did not involve, or even rely on the existence of potential function---that is, it did not rely on the metric tensor being a Hessian map.  If the metric tensor is {\em not} a Hessian map, we cannot define the link function nor Bregman divergence, and the standard Mirror Descent updates \eqref{eq:md-gen-1}--\eqref{eq:md-gen} are not defined, nor are any prior derivations of Mirror Descent that we are aware of.  Nevertheless our equivalent primal-only derivation of Mirror Descent \eqref{eq:md-as-disc} \emph{does} allow us to  generalize Mirror Descent as a first order optimization procedure to {\em any} metric tensor, even if it is not a Hessian map---we simply use \eqref{eq:md-as-disc} as the definition of Mirror Descent.

To be more precise, we can define $\wmdk{k}$ iteratively as follows:  given $\wmdk{k}$ and the gradient $g^{(k)}=\nabla F(\wmdk{k})$, consider the path defined by the differential equation 
\begin{equation}\label{eq:mirrorless}
\begin{gathered}
    \dot{\w}(t) = -H(\w(t))^{-1} g^{(k)} \textrm{ with } \w(0)=\wmdk{k},\\
    \text{ and let }\wmdk{k+1}=\w(\eta). 
\end{gathered}
\end{equation}

For a general metric tensor $H$, the above updates requires computing the solution of a differential equation at each step, which may or may not be efficiently computable (just as the standard Mirror Descent updates may or may not be efficiently computable depending on the link function).  Nevertheless, it is important to note that the differential equation \eqref{eq:mirrorless}  depends on the objective $F$ only through a single gradient $g^{(k)}=\nabla F(\wmdk{k})$.  That is, the only required access to the objective in order to implement the method is a single gradient access per iteration---the rest is just computation in terms of the pre-specified geometry (similar to computing the link and inverse link in standard Mirror Descent).  The updates \eqref{eq:mirrorless} thus define a valid first order optimization method, and independent of the tractability of solving the differential equation, could be of interest in studying optimization with first order oracle access under general geometries.

We also show in Appendix \ref{app:convergence} that when the eigenvalues of $H(\w)$ are bounded from above and below, and when the objective $F$ is smooth and strongly convex with respect to L2, then the updates \eqref{eq:mirrorless} guarantee linear convergence to a minimizer of $F$, even when the metric tensor is not a Hessian.
While this result is limited to the L2 geometry on $\mathcal{W}$, it at least suggests that the algorithm can be expected to succeed without relying on $H$ being a Hessian map.


\section{Importance of the Parametrization}\label{sec:reparam}

Our development in Section \ref{sec:discretization} relied not only on a Riemannian manifold $(\cW,H)$, but on a specific parametrization (or ``chart'') for the manifold, or in our presentation, on identifying the manifold $\cW$, and its tangent space, with $\bR^d$.  Let us consider now the effect of a change of parametrization (i.e.~on using a different chart).

Consider a change of parameters $\tilde{\w}=\reparam(\w)$ for some smooth invertible $\reparam$ with invertible Jacobian $\nabla \reparam$, that specifies an isometric Riemannian manifold $(\tilde{\cW},\tilde{H})$, i.e.,~such that  $d_H(\w,\w+\dd\w)={d}_{\tilde{H}}(\reparam(\w),\reparam(\w+\dd\w))$ for all infinitesimal $\dd\w$.  The metric tensor $\tilde{H}(\tilde{\w})$ for the isometric manifold   is given by 
\begin{equation}\label{eq:reparam-metric-tensor}
    \tilde{H}(\tilde{\w})=\nabla \reparam^{-1}(\tilde\w)^\top H(\reparam^{-1}(\tilde\w))\nabla \reparam^{-1}(\tilde\w),
\end{equation}
where  recall that the Jacobian of the inverse is the inverse Jacobian, $\nabla \reparam(\w)^{-1}=\nabla \reparam^{-1}(\tilde{\w})$.  This can also be thought of as using a different chart for the manifold (in our case, a global chart, since the manifold is isomorphic to $\bR^d$). 

In understanding methods operating on a manifold, it is important to separate what is intrinsic to the manifold and its geometry, and what aspects of the method are affected by the parametrization, especially since one might desire ``intrinsic'' methods that depend only on the manifold and its geometry, but not on the parametrization.  We therefore ask how changing the parametrization affects our development. In particular, does the Mirror Descent discretization, and with it the Mirror Descent updates change with parameterization? 

Consider  minimizing $F(\w)$, which after the reparametrization we denote as $\tilde{F}(\tilde\w)=F(\reparam^{-1}(\tilde\w))$.  The Riemannian Gradient Flow on $\tilde{\w}$ is 
\begin{equation}\label{eq:tilde-rgf}
\dot{\tilde{\w}}(t)=-\tilde{H}(\tilde{\w}(t))^{-1}\nabla \tilde{F}(\tilde\w(t)),
\end{equation}
where note that $\nabla \tilde{F} (\tilde{\w})=\nabla \reparam^{-1}(\tilde{\w})^\top \nabla F(\reparam^{-1}(\tilde\w))$. 

Since our initial development of the Riemannian Gradient Flow in  eq. \eqref{eq:true-rgf} was independent of the parametrization, it should be the case that the solution $\tilde{\w}(t)$ of \eqref{eq:tilde-rgf}, i.e.~as gradient flow in $(\tilde{\c{W}},\tilde{H})$, is equivalent to gradient flow in $(\c{W},H)$, i.e.~$\tilde\w(t)=\reparam(\w(t))$ where $\w(t)$ is the solution of \eqref{eq:rgf}. It is however, insightful to verify this directly: to do so, let us take the solution $\w(t)$ of \eqref{eq:rgf},  define $\tilde{\w}(t)\doteq\reparam(\w(t))$, and check whether it is in-fact a solution to \eqref{eq:tilde-rgf}.  Starting from the left hand side of \eqref{eq:tilde-rgf}, we have:
\begin{align}
    \dot{\tilde{\w}} &= \nabla \reparam(\w) \dot{\w}=-\nabla \reparam(\w)
    H(\w)^{-1} \nabla F(\w) \notag\\
    &= -\tilde{H}(\tilde\w)^{-1} \nabla \tilde{F}(\tilde\w),
\end{align}
 thus verifying that $\reparam(\w(t))$ indeed satisfies \eqref{eq:tilde-rgf}.

Do the same arguments hold also for the Mirror Descent discretization \eqref{eq:md-disc}?  Taking the solution $\w(t)$ of \eqref{eq:md-disc} and setting $\tilde{\w}\doteq\reparam(\w)$, we can follow the same derivation as above, except now the metric tensor $H$ and gradient $\nabla F$ are calculated at different points, $\w(t)$ and $\wteta$, respectively.
\begin{flalign}
    \dot{\tilde{\w}} &= -\nabla \reparam(\w) 
    H(\w)^{-1} \nabla F(\wteta) \notag\\
    &= {{-\tilde{H}(\tilde\w)^{-1} {\left( \nabla \reparam(\wteta)^{-1} \nabla \reparam(\w)\right)^\top} \nabla \tilde{F}(\tildewteta)}}.\label{eq:MDchange}
\end{flalign}
We can see why the Mirror Descent discretization, and hence also the Mirror Descent iterates are {\em not} invariant to changes in parametrization: if $\nabla \reparam(\w)$ is fixed, i.e.,~the reparametrization is affine, we have  $\nabla \reparam(\wteta)^{-1} \nabla \reparam(\w)=I$ and \eqref{eq:MDchange} shows that $\tilde\w=\reparam(\w)$ satisfies the Mirror Descent discretized differential equation w.r.t.~$\tilde H(\tilde\w)$.  But more generally, the discretization would be affected by the ``alignment'' of the Jacobians along the solution path. We note that, for essentially the same reason, NGD is not generally invariant to reparametrization either.


A related question is how a reparametrization affects whether the metric tensor is a Hessian map.  Indeed, for a particular parametrization (i.e.~chart), the existence of a potential function $\psi$ such that $H=\nabla^2 \psi$ depends on whether $H$ satisfies \eqref{eq:Hijk}, and it may well be the case that $H(\w)$ is not a Hessian map but $\tilde{H}(\tilde{\w})$ is, or visa versa (in fact, in general if $\reparam(\w)$ is non-affine we cannot expect both $H(\w)$ and $\tilde{H}(\tilde{\w})$ to be Hessian maps).  Does every Riemannian manifold have a reparametrization (i.e.~chart) where the metric tensor is a Hessian map, i.e.~which corresponds to ``classical'' Mirror Descent?  \citet{amari2014curvature} showed that while all Riemannian manifolds isomorphic to $\bR^2$ admit a parametrization for which the metric tensor is a Hessian map, this is not true in higher dimensions; even a manifold isomorphic to $\bR^3$ might not admit any parametrization with a Hessian metric tensor.

We see then how our potential-free derivation of Section \ref{sec:mirrorless} can indeed be much more general than the traditional view of Mirror Descent which applies only when the metric tensor is a Hessian map and a potential function exists: for many Riemannian manifolds, there is no parametrization with a Hessian metric tensor, and so it is not possible to define Mirror Descent updates classically such that the Riemannian gradient flow is obtained as their limit.  Yet, the approach of Section \ref{sec:mirrorless} always allows us to do so.  Furthermore, even for manifolds for which there exists a parametrization where the metric tensor is the Hessian of some potential function, our approach allows considering discretizations in other isometric  parametrization.

Finally, in light of our characterization of Mirror Descent, several readers have suspected that Mirror Descent might be equivalent to Riemannian Gradient Descent \citep[see][]{absil2009optimization} using steps that follow geodesics on the manifold (i.e.~using the exponential map retraction). However, the Riemannian Gradient and geodesics are intrinsic, whereas we have shown that Mirror Descent is not.

\section{Summary}
In this paper we presented a ``primal'' derivation of Mirror Descent, based on a discretization of the Riemannian Gradient Flow,  and showed how it can be useful for understanding, thinking about, and potentially analyzing  Mirror Descent, Natural Gradient Descent, and Riemannian Gradient Flow.  We also showed how this view suggests an generalization of (Mirrorless) Mirror Descent to any Riemannian geometry. It is important to identify interesting and useful examples of metric tensors $H$ that are not Hessian maps for which this Mirrorless Mirror Descent perspective can lead to new algorithms and analysis. 

\paragraph{Acknowledgements} We thank Andr\'e Neves for a helpful discussion about Riemannian geometry and for pointing out \cite{amari2014curvature}. This work was supported by NSF-RI 1764032 and was done, in part, while SG and NS were visiting the Simons Institute for the Theory of Computing. BW is supported by a Google Research PhD fellowship.

\section*{Broader Impact}

Mirror Descent and Natural Gradient Descent are important and popular optimization approaches, both theoretically and practically, and both play central roles in machine learning.  Aside from a direct role as a method for minimizing a given optimization objective, Mirror Descent, and ideas derived from it, such as the role of the potential function, also play a central role in online learning \citep[e.g.][for a survey]{shalev2012online}, and throughout learning theory.  Obtaining a better understanding of Mirror Descent, and its relationship with Natural Gradient Descent, has thus been an ongoing endeavour in the optimization and machine learning communities, with past work, e.g.~that of \cite{raskutti2015information}, being influential in guiding the community's thinking about these methods.   There is also been much interest in the community lately in understanding and re-deriving optimization methods as discretizations of continuous solutions to differential equations (e.g., \cite{wibisono2016variational}).  Obtaining a novel, and very different, derivation of Mirror Descent as such a discretization can thus be very impactful in guiding our thinking about it, and in devising novel insights and methods based on it.  
Our novel view could be particularly impactful since unlike all prior derivations of Mirror Descent, our approach does {\em not} rely on a dual and is thus valid much more broadly, and allows generalizing Mirror Descent to many more settings (as discussed in Section \ref{sec:mirrorless}).  

\paragraph{Cross-Disciplinary Impact and Impact in Education}  Beyond the possible practical implications, our primal-only view also has pedagogical implications, as it can allows for an arguably more direct derivation of Mirror Descent that might be easier to understand intuitively, especially by an audience not familiar with duality, link functions and Bregman divergences.  As such, it can open up understanding of this method to a wider audience.  In fact, the derivation was initially derived in order to explain Mirror Descent to physicists, and several colleagues already adopted it in the classroom.

\appendix
\section*{Appendix}

\section{Stochastic Discretization}
\newcommand{\wsmdk}[1]{\hat{\w}_\text{MD}^{(#1)}}
 In this  appendix, we briefly discuss how yet another discretization of Riemannian Gradient Flow captures Stochastic Mirror Descent \citep{nemirovsky1983problem}, and could be useful in studying optimization versus statistical issues in training.
We have so far discussed exact, or {\em batch} Mirror Descent, but a popular variant is {\em Stochastic} Mirror Descent, where at each iteration we update based on an unbiased estimator $g^{(k)}$ of the gradient $\nabla F(\w^{(k)})$, i.e.~such that $\mathbb{E} g^{(k)}=\nabla F(\w^{(k)})$ as, 
\begin{equation}
   \wsmdk{k+1}=\argmin_{{\w}\in\c{W}}\; \eta\innerprod{g^{(k)}}{\w}+ D_\psi(\w,\wsmdk{k}).
\label{eq:md-gen-2}
\end{equation}
Consider stochastic objective of the form:
\begin{equation}
    \label{eq:Fwz}
    F(\w)=\mathbb{E}_{z}f(\w,z).
\end{equation}
We can derive Stochastic Mirror Descent from the following stochastic discretization of  Riemannian Gradient Flow \eqref{eq:rgf}: 
\begin{equation}
    \dot{\w}(t) = -H(\w(t))^{-1}\nabla f(\wteta,\ztnu)
    \label{eq:disc-stoch}
\end{equation}
where $z_t$ are sampled i.i.d., and we used two different resolutions, $\eta$ and $\nu$, to control the discretization.

Setting $\nu=\eta$ and taking $\wsmdk{k}=\w(\eta k)$ we recover ``single example`` Stochastic Mirror Descent, i.e.~where at each iteration we use a gradient estimator $g^{(k)}=\nabla f(\w^{(k)},z)$ based on a single i.i.d.~example.  But varying $\nu$ relative to $\eta$ also allows us to obtain other variants.  

Taking $\nu<\eta$, e.g. $\eta=b\cdot\nu$ for $b>1$, we recover Mini-Batch Stochastic Mirror Descent, where at each iteration we use a gradient estimator obtained by averaging across $b$ i.i.d.~examples.  To see this, note that solving \eqref{eq:disc-stoch} as in Theorem \ref{thm:main} we have that for $i=0,\ldots,b-1$, $\nabla\psi(\w(k\eta+(i+1)\nu))=\nabla\psi(\w(k\eta+i\nu))-\nu \nabla f(\w(k\eta),z_{k\eta+i\nu})$ and so $\nabla\psi(\w((k+1)\eta))=\nabla\psi(\w(k\eta+b\nu))=\nabla\psi(\w(k\eta))-\eta\frac{1}{b}\sum_i \nabla f(\w(k\eta),z_{k\eta+i\nu})$.

At an extreme, as $\nu\rightarrow 0$, the solution of \eqref{eq:disc-stoch} converges to the solution of the  Mirror Descent discretization \eqref{eq:md-disc} and we recover the exact Mirror Descent updates on the population objective.  

It is also interesting to consider $\eta<\nu$, in particular when  $\eta\rightarrow 0$ while  $\nu>0$ is fixed. This corresponds to optimization using stochastic (infinitesimal) gradient flow, where over a time $T$ we use $T/\nu$ samples.  Studying how close the discretization \eqref{eq:disc-stoch} remains  to the population Riemannian Gradient Flow  \eqref{eq:rgf}, in terms of $\eta$ and $\nu$, could allow us to tease apart the optimization complexity and sample complexity of learning (minimizing the population objective).

\section{Convergence of Mirrorless Mirror Descent}\label{app:convergence}
\begin{restatable}{theorem}{thmconvergence}
Let the metric tensor $0 \prec \alpha I \preceq H(\w) \preceq \beta I$ for all $\w$ and let $F$ be $\gamma$-smooth and $\lambda$-strongly convex with respect to L2. Then the updates \eqref{eq:mirrorless} with constant stepsize $\eta = \frac{\alpha^2}{\gamma\beta}$ will converge at a rate
\[
F(\wmdk{K}) - F^* 
\leq \left(F(\wmdk{0}) - F^*\right) \exp\left(- \frac{\lambda\alpha^2 K}{\gamma\beta^2} \right) 
\]
\end{restatable}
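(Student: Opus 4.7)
The plan is to prove linear convergence by showing a sufficient per-step decrease in $F$, then applying the Polyak-Łojasiewicz (PL) inequality implied by strong convexity. The starting observation is that because $g^{(k)} = \nabla F(\wmdk{k})$ is \emph{constant} along the ODE segment $t \in [k\eta, (k+1)\eta)$ in \eqref{eq:mirrorless}, the net displacement over a single outer iteration can be written cleanly as
\[
\wmdk{k+1} - \wmdk{k} = \int_0^\eta \dot{\w}(s)\,ds = -M^{(k)} g^{(k)}, \qquad M^{(k)} := \int_0^\eta H(\w(s))^{-1}\,ds .
\]
The eigenvalue bounds $\alpha I \preceq H(\w) \preceq \beta I$ translate into $\tfrac{\eta}{\beta} I \preceq M^{(k)} \preceq \tfrac{\eta}{\alpha} I$, which is the only structural fact I need about $M^{(k)}$. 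Note that this step does not require $H$ to be a Hessian map, which is precisely what makes Mirrorless Mirror Descent amenable to analysis here.

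Next I would invoke $\gamma$-smoothness of $F$ on the full outer step:
\[
F(\wmdk{k+1}) \le F(\wmdk{k}) - \langle g^{(k)}, M^{(k)} g^{(k)}\rangle + \tfrac{\gamma}{2}\,\bigl\lVert M^{(k)} g^{(k)}\bigr\rVert^2 .
\]
Using the eigenvalue bounds, $\langle g^{(k)}, M^{(k)} g^{(k)}\rangle \ge (\eta/\beta)\lVert g^{(k)}\rVert^2$ and $\lVert M^{(k)} g^{(k)}\rVert^2 \le (\eta^2/\alpha^2)\lVert g^{(k)}\rVert^2$, so
\[
F(\wmdk{k+1}) - F(\wmdk{k}) \le -\eta\left(\tfrac{1}{\beta} - \tfrac{\gamma\eta}{2\alpha^2}\right)\lVert g^{(k)}\rVert^2 .
\]
Plugging in the prescribed stepsize $\eta = \alpha^2/(\gamma\beta)$ makes the parenthesized factor equal $1/(2\beta)$, giving the clean per-step descent $F(\wmdk{k+1}) - F(\wmdk{k}) \le -\tfrac{\alpha^2}{2\gamma\beta^2}\lVert g^{(k)}\rVert^2$.

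From here the argument is the standard smooth-and-strongly-convex closing move: $\lambda$-strong convexity of $F$ implies the PL inequality $\lVert \nabla F(\w)\rVert^2 \ge 2\lambda(F(\w) - F^*)$, which yields
\[
F(\wmdk{k+1}) - F^* \le \left(1 - \tfrac{\lambda\alpha^2}{\gamma\beta^2}\right)\bigl(F(\wmdk{k}) - F^*\bigr).
\]
Iterating and using $1 - x \le e^{-x}$ produces the claimed rate. The main (mild) obstacle I anticipate is justifying the representation $\wmdk{k+1} - \wmdk{k} = -M^{(k)} g^{(k)}$ rigorously, which requires noting that existence and uniqueness of the ODE solution on $[0,\eta]$ follows from local Lipschitzness of $\w \mapsto H(\w)^{-1}$ (implicit in the smoothness assumptions on the manifold) together with a priori boundedness of the trajectory guaranteed by the eigenvalue bound on $H^{-1}$; everything else is a short calculation.
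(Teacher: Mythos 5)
Your proposal is correct and follows essentially the same route as the paper's proof: write the per-iteration displacement as $-\hat{H}_k \nabla F(\wmdk{k})$ with $\eta\beta^{-1} I \preceq \hat{H}_k \preceq \eta\alpha^{-1} I$, apply $\gamma$-smoothness, plug in the stepsize, and close with the PL inequality from strong convexity. The only difference is your added remark on ODE well-posedness, which the paper leaves implicit.
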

\begin{proof}
Throughout this proof, we will use $\|\cdot\|$ exclusively to denote the L2 norm.
We begin by observing that
\begin{align*}
\wmdk{k+1} - \wmdk{k}
&= -\int_{\eta k}^{\eta (k+1)} H(\w(t))^{-1}\nabla F(\wmdk{k}) dt \\
&= -\hat{H}_k \nabla F(\wmdk{k})
\end{align*}
for some matrix $\eta \beta^{-1} I \preceq \hat{H}_k \preceq \eta\alpha^{-1} I$. Furthermore, by the $\gamma$-smoothness of $F$
\begin{align*}
&F(\wmdk{k+1}) - F^* \\
&\leq F(\wmdk{k}) - F^* + \innerprod{\nabla F(\wmdk{k})}{\wmdk{k+1} - \wmdk{k}} \\
&\qquad\qquad+ \frac{\gamma}{2}\nrm*{\wmdk{k+1} - \wmdk{k}}^2 \\
&= F(\wmdk{k}) - F^* - \innerprod{\nabla F(\wmdk{k})}{\hat{H}_k \nabla F(\wmdk{k})} \\
&\qquad\qquad +\frac{\gamma}{2}\nrm*{\hat{H}_k \nabla F(\wmdk{k})}^2 \\
&\leq F(\wmdk{k}) - F^* - \eta\beta^{-1}\nrm*{\nabla F(\wmdk{k})}^2 \\
&\qquad\qquad+ \frac{\eta^2\gamma}{2\alpha^2}\nrm*{\nabla F(\wmdk{k})}^2 \\
&= F(\wmdk{k}) - F^* - \frac{\alpha^2}{2\gamma\beta^2}\nrm*{\nabla F(\wmdk{k})}^2
\end{align*}
where we used that $\eta = \frac{\alpha^2}{\gamma\beta}$ for the final equality. Finally, we note that by the $\lambda$-strong convexity of $F$, for any $\w$
\begin{equation*}
\nrm*{\nabla F(\w)}^2 \geq 2\lambda\left( F(\w) - F^* \right)
\end{equation*}
We conclude that
\begin{align*}
F(\wmdk{k+1}) - F^*
&\leq \left(1 - \frac{\lambda\alpha^2}{\gamma\beta^2} \right) \left(F(\wmdk{k}) - F^*\right) 
\end{align*}
Unrolling this recursion yields the stated bound.
\end{proof}

{
\section{Implications for Implicit Bias}  

In this Appendix we discuss a related issue to the main thrust of the paper, which is the implicit bias of Riemannian Gradient Flow and its relationship to the existence of a potential function such that $\nabla^2 \psi = H$.

In an underdetermined (overparametrized) optimization problem, when there are many global minima, it is important to understand not only that a minimum is reached, but \emph{which} global minimum is reached.  Such  situations are common in contemporary machine learning. When using models with many more parameters then data points, and the implicit bias of the optimization algorithm (i.e.~the bias to prefer some global minimum over others) has been argued as playing an important role, specially in deep learning \citep[e.g.][]{neyshabur2014search,neyshabur2017geometry,
gunasekar2017implicit,telgarsky2013margins,
lyu2019gradient}).  

At least when optimizing overparametrized linear regression objective, we have a precise understand of the implicit bias of Mirror Descent, in terms of the potential $\psi$ and Bregman divergence $D_\psi(\cdot,\cdot)$ \cite{gunasekar2018characterizing}. With any sufficiently small stepsize\footnote{To be more precise, in realizable models if Mirror Descent  converges to a zero error solution--see \citet{gunasekar2018characterizing} for details}, Mirror Descent converges to the special global minimum that minimizes the Bregman divergence to the initialization:
\begin{equation}
    \argmin_{\w} D_\psi(\w,\w(0)) \quad \mathrm{s.t.}\quad F(\w)=0.
\end{equation}
Since this characterization holds for any sufficiently small  stepsize, it also holds for the infinitesimal limit of Mirror Descent, \textit{i.e.,} the Riemannian Gradient Flow \eqref{eq:rgf}.  Hence, for  Riemannian Gradient Flows that  corresponds to a potential $\psi$, we get a crisp characterization of its implicit bias.

\paragraph{Example: Square Parametrization}
As an example,  we discuss a recently studied flow that leads to an interesting implicit bias \cite{woodworth2019kernel}.  Consider the linear regression objective $F(\w)=\norm{A\w-b}^2$, with the dynamics induced on $\w$ by gradient descent dynamics w.r.t.~\,\,$(\bfu_+,\bfu_-)\in\bR^{2d}$ where $\w=\bfu_+^2-\bfu_-^2$ element-wise.  We can think of these as the dynamics on the predictor implemented by a two layer ``diagonal'' linear network when we train the weights by gradient descent.  \citet{woodworth2019kernel} showed that with initialization $\bfu_+(0)=\bfu_-(0)=\alpha \mathbf{1}_{d}$, the induced dynamics on $\w$ are,
\begin{equation}
\dot{\w}(t) = -2\alpha^2\diag\brk*{\sqrt{\frac{\w(t)^2}{4\alpha^4} + 1}} \nabla F(\w(t)).
\end{equation}
This corresponds to Riemannian Gradient Flow on $F$ with the metric tensor
\begin{equation}
H(\w) = \diag\brk*{\sqrt{\w^2 + 4\alpha^4}}^{-1}.
\end{equation}
Since the metric tensor is diagonal, we can integrate each component and get $H(\w)=\nabla^2 \psi_\alpha(\w)$ for the potential:
\begin{equation}
\psi_\alpha(\w) = \sum_{i=1}^d \w_i\arcsinh\prn*{\frac{\w_i}{2\alpha^2}} - \sqrt{\w_i^2 + 4\alpha^4}.
\end{equation}
We can therefore conclude that $\w$ will converge to $\argmin_{\w} D_{\psi_\alpha}(\w(0),\w)\ \textrm{s.t.}\ F(\w) = 0$, recovering the result from \cite{woodworth2019kernel}.  As discussed there, this implicit bias interpolates between the $\ell_1$ norm, which is obtained as the limit when $\alpha\rightarrow 0$ and the $\ell_2$, which is obtained as the limit when $\alpha\rightarrow\infty$.

\paragraph{A Negative Example} To see a situation where a potential does not exist, consider instead a matrix factorization model 
\begin{equation}
F(\W) = \nrm*{\mathcal{A}(\W) - y}^2.
\end{equation}
where $\mathcal{A}: \R^{d\times d} \rightarrow \R^N$ is a linear operator defined as $\mathcal{A}(Y)_i = \tri*{A_i, Y}$ and $\W$ is parametrized as $\W = \U\U^\top$. Then, gradient flow with respect to $\U$ yields dynamics \cite{gunasekar2017implicit}
\begin{equation}
\dot{\W}(t) =  -\nabla_\W F(\W(t)) \W(t) - \W(t) \nabla_\W F(\W(t)).
\end{equation}
The dynamics on $\W$ are again a Riemannian Gradient Flow, this time with the metric tensor $H(\W)$ that is the inverse of the linear map $G \rightarrow \W G + G \W$. Unfortunately, $H(\W)$ does not satisfy \eqref{eq:Hijk}, and is therefore \emph{not} a Hessian map, so it cannot be understood in terms of a potential function $\psi$. Thus, we do not have a crisp characterization of the implicit bias, and we cannot define Mirror Descent updates classically (but we can still define them using the Mirrorless Mirror Descent discretization!).

\paragraph{Reparametrizations} It is important to distinguish between two kinds of reparametrizations. When we parametrize $\w=\bfu_+^2-\bfu_-^2$ or $\W=\U\U^\top$ and then used the Euclidean geometry (corresponding to gradient descent) on $\bfu$, this was not an isometry as we did {\em not}  maintain the manifold structure and metric.  This type of change {\em does} effect the Gradient Flow dynamics, which is how we got interesting dynamics when we mapped back to $\w$.  This is different from the isometric reparametrizations discussed in Section \ref{sec:reparam}.  As we discussed, such isometric reparametrizations do {\em not}  change the Riemannian Gradient Flow dynamics.  Additionally, although Mirror Descent updates (with finite step size) \emph{do} change when we change parametrization (even if we preserve the intrinsic manifold metric, as discussed in Section \ref{sec:reparam}), the implicit bias of Mirror Descent (i.e.~the limit point of Mirror Descent updates), does {\em not} change since it is entirely determined by the (invariant) potential function.
}

\nocite{juditsky2019unifying}
\bibliographystyle{plainnat}
\bibliography{bib}

\end{document}